\newtheorem{theorem}{Theorem}[]
\newtheorem{lemma}[theorem]{Lemma}
\newtheorem{proposition}[theorem]{Proposition}
\newtheorem{corollary}[theorem]{Corollary}
\begin{document}

\title{On Power-law Kernels, corresponding Reproducing Kernel Hilbert Space and Applications}
\author{\IEEEauthorblockN{Debarghya Ghoshdastidar, Ambedkar Dukkipati}
\IEEEauthorblockA{Department of Computer Science and Automation\\
Indian Institute of Science, Bangalore - 560012.\\
email: \{debarghya.g,ad\}@csa.iisc.ernet.in}}
\maketitle
\begin{abstract}
The role of kernels is central to machine learning.
Motivated by the importance of power-law distributions in statistical modeling,
in this paper, we propose the notion of power-law kernels to 
investigate power-laws in learning problem.
We propose two power-law kernels by
generalizing Gaussian and Laplacian kernels. This generalization
is based on distributions, arising out of maximization of a
generalized information measure known as nonextensive entropy that is
very well studied in statistical mechanics. We prove that the
proposed kernels are positive  definite, and provide some insights
regarding the corresponding  Reproducing Kernel Hilbert Space
(RKHS). We also study practical significance of both kernels in
classification and regression, and present some simulation results.
\end{abstract}

\section{Introduction}
The notion of `power-law' distributions is not recent, and they first
arose in economics in the studies of Pareto~\cite{Pareto_1906_jour_SocEdLib} hundred
years ago.  
Later, power-law behavior was observed in various fields
such as physics, biology, computer science etc.~\cite{Gutenberg_1954_book_Princeton,Barabasi_1999_jour_Science}, 
and hence the phrase ``ubiquitous power-laws''. 
Though the term was first coined for distributions with a 
negative constant exponent, \textit{i.e.}, $f(x) \propto x^{-\alpha}$,
the meaning of the term has expanded in due course of time
to include various fat-tailed distributions, \textit{i.e.},
distributions decaying at a slower rate than Gaussian distribution.
This class is also referred to as generalized Pareto distributions.

On the other hand, though the generalizations of information measures
were proposed in the beginning of the birth of information theory,
only (relatively) recently their connections with power-law
distributions have been established. While maximization of Shannon 
entropy gives rise to exponential distributions, these generalized
measures give power-law distributions. This actually led to a 
dramatic increase in interest in generalized
information measures and their application to statistics. 

Indeed, the starting point of the theory of generalized measures of
information is due to Alfred R\'{e}nyi~\cite{Renyi_1960_jour_MTA3}. 
Another generalization was introduced by Havrda and Charv\'{a}t~\cite{Havrda_1967_jour_Kybernetika}, 
and then studied by Tsallis~\cite{Tsallis_1988_jour_StatPhy} in statistical mechanics that is known as
Tsallis entropy or nonextensive entropy. Tsallis entropy involves a
parameter $q$, and it retrieves Shannon entropy as $q\to1$.
The Shannon-Khinchin axioms of Shannon entropy have been generalized to this
case~\cite{Suyari_2004_jour_ITTrans}, and this entropy functional has been studied
in information theory, statistics and many other fields.
Tsallis entropy has been used to study power-law behavior in different cases
like finance, earthquakes and network traffic~\cite{Sato_2001_jour_JourPhyConf,Abe_2003_jour_PhyRevE,Abe_2005_jour_PhysicaA}.

In kernel based machine learning, 
positive definite kernels
are considered as a measure of similarity between points~\cite{Scholkopf_2002_book_MIT}.
The choice of kernel is critical to the performance of the learning algorithms,
and hence, many
kernels have been studied in literature~\cite{Christianini_2004_book_Cambridge}.
Kernels based on information theoretic quantities are also commonly used in 
text mining and image processing~\cite{He_2003_jour_SPTrans}. However, such 
kernels are defined on probability measures.
Probability kernels based on Tsallis entropy have also been studied in~\cite{Martins_2009_jour_JMLR}.

In this work, we are interested in kernels based on maximum entropy distributions. 
It turns out that Gaussian, Laplacian, Cauchy kernels, which have been extensively
studied in machine learning, have corresponding distributions,
which are maximum entropy distributions.
This motivates us to look into kernels that correspond to maximum Tsallis entropy distributions, 
also termed as Tsallis distributions. These distributions have inherent advantages 
as they are generalizations of exponential distributions, and
they exhibit power-law nature~\cite{Sato_2001_jour_JourPhyConf,Ghoshdastidar_2012_conf_ISIT}.
In fact, the value of $q$ controls the nature of the power-law tails.

In this paper, we propose a new kernel based on $q$-Gaussian distribution,
which is a generalization of Gaussian,
obtained by maximizing Tsallis entropy under certain moment constraints. 
Further, we introduce a generalization of the Laplace distribution following the same lines,
and propose a similar $q$-Laplacian kernel. We give some insights into
reproducing kernel Hilbert spaces (RKHS) of these kernels.
We prove that the proposed kernels
are positive definite over a range of values of $q$. 
We demonstrate the effect of these kernel by applying them to
machine learning tasks: classification and regression by
SVMs. We provide
results indicating that in some cases, the proposed kernels 
perform better than their counterparts (Gaussian and Laplacian
kernels) for certain values of $q$.

\section{Tsallis distributions}
Tsallis entropy can be obtained by
generalizing the information of a single event
in the definition of Shannon entropy as shown in~\cite{Tsallis_1988_jour_StatPhy},
where natural logarithm is replaced with $q$-logarithm defined as 
{$\ln_{q} x = \frac{x^{1-q}-1}{1-q}$}, {$q \in \mathbb{R}$}, {$q > 0$}, {$q\neq1$}.
Tsallis entropy in a continuous case is defined as~\cite{Dukkipati_2007_jour_PhyA}
\begin{equation}
H_{q}(p) = \frac{1-\displaystyle\int_{\mathbb{R}}\big(p(x)\big)^q\mathrm{d}x}{q-1}, \quad{q\in\mathbb{R}}, q>0, q\neq1.
\end{equation}
This function retrieves the differential Shannon entropy functional as {$q\rightarrow1$}.
It is called nonextensive because of its pseudo-additive nature~\cite{Tsallis_1988_jour_StatPhy}.

Kullback's minimum discrimination theorem~\cite{Kullback_1959_book_Wiley} establishes
connections between statistics and information theory.
A special case is Jaynes' maximum entropy principle~\cite{Jaynes_1957_jour_PhyRev}, by which
exponential distributions can be obtained by maximizing Shannon entropy functional,
subject to some moment constraints.
Using the same principle, maximizing Tsallis entropy under the following constraint
\begin{equation}
q\text{-mean } \langle{x}\rangle_q := \frac{\displaystyle\int_{\mathbb{R}}x\big(p(x)\big)^q\mathrm{d}x}
{\displaystyle\int_{\mathbb{R}}\big(p(x)\big)^q\mathrm{d}x} = \mu, 
\label{q_mean}
\end{equation}
results in a distribution known as $q$-exponential distribution~\cite{Tsallis_1998_jour_PhysicaA}, which is of the form
\begin{equation}
\label{Eq1:formula}
p(x) = {\frac{1}{\mu}} \exp_q\left(-\frac{x}{(2-q)\mu}\right),
\end{equation}
where the $q$-exponential, {$\exp_q(z)$}, is expressed as
\begin{equation}
\label{q_exp}
\exp_q(z) = \big(1 + (1-q)z\big)_+^{\frac{1}{1-q}}. 
\end{equation}
The condition {$y_+=\max(y,0)$} in~\eqref{q_exp} is called the Tsallis cut-off condition,
which ensures existence of $q$-exponential.  If 
a constraint based on the second moment,
\begin{equation}
\\ q\text{-variance } \langle{x^2}\rangle_q := \frac{\displaystyle\int_{\mathbb{R}}(x-\mu)^2 \big(p(x)\big)^q\mathrm{d}x}
{\displaystyle\int_{\mathbb{R}}\big(p(x)\big)^q\mathrm{d}x} = \sigma^2,
\label{q_var}
\end{equation}
is considered along with~\eqref{q_mean},  one obtains the $q$-Gaussian distribution~\cite{Prato_1999_jour_PhyRevE} defined as
\begin{equation}
\label{Gq1:formula}
p(x) = {\frac{\Lambda_{q}}{\sigma\sqrt{3-q}}} \exp_q\left(-\frac{(x-\mu)^2}{(3-q)\sigma^2}\right),
\end{equation}
where {$\Lambda_q$} is the normalizing constant~\cite{Prato_1999_jour_PhyRevE}.
However, instead of~\eqref{q_mean}, if the constraint 
\begin{equation}
\langle{|x|}\rangle_q := \frac{\displaystyle\int_{\mathbb{R}}|x|\big(p(x)\big)^q\mathrm{d}x}
{\displaystyle\int_{\mathbb{R}}\big(p(x)\big)^q\mathrm{d}x} = \beta, 
\end{equation}
is considered, then maximization of Tsallis entropy with only this constraint leads to a
$q$-variant of the doubly exponential or Laplace distribution centered at zero. A translated version of the distribution
can be written as
\begin{equation}
\label{Lq1:formula}
p(x) = {\frac{1}{2\beta}} \exp_q\left(-\frac{|x-\mu|}{(2-q)\beta}\right).
\end{equation}
As {$q\to1$}, we retrieve the
exponential, Gaussian and Laplace distributions as special cases
of~\eqref{Eq1:formula}, \eqref{Gq1:formula} and~\eqref{Lq1:formula},
respectively. The above distributions can be extended to a multi-dimensional
setting in a way similar to Gaussian and Laplacian distributions,
by incorporating 2-norm and 1-norm in~\eqref{Gq1:formula} and~\eqref{Lq1:formula}, respectively.

\section{Proposed Kernels}
\label{kernel}

Based on the above discussion, 
we define the $q$-Gaussian kernel {$G_q:\mathcal{X} \times \mathcal{X} \mapsto \mathbb{R}$},
for a given $q\in\mathbb{R}$,  as
\begin{equation}
G_q(x,y) = \exp_q \left(-\frac{\Vert{x-y}\Vert_2^2}{(3-q)\sigma^2}\right) 
\text{ for all } x,y \in \mathcal{X},
\end{equation}
where {$\mathcal{X}\subset\mathbb{R}^N$} is the input space, and {$q,\sigma \in \mathbb{R}$} are 
two parameters controlling the behavior of the kernel, satisfying
the conditions {$q \neq 1$}, {$q \neq 3$} and {$\sigma \neq 0$}. 
For {$1<q<3$}, the term inside the bracket is non-negative and hence, the kernel is of the form
\begin{equation}
\label{qG_ker}
G_q(x,y) = \left(1 + \frac{(q-1)}{(3-q)\sigma^2} \Vert{x-y}\Vert_2^2 \right)^{\frac{1}{1-q}},
\end{equation}
where {$\Vert.\Vert_2$} is the Euclidean norm.
On similar lines, we use~\eqref{Lq1:formula} 
to define the $q$-Laplacian kernel {$L_q:\mathcal{X} \times \mathcal{X} \mapsto \mathbb{R}$}
\begin{equation}
L_q(x,y) = \exp_q \left(-\frac{\Vert{x-y}\Vert_1}{(2-q)\beta}\right) 
\text{ for all } x,y \in \mathcal{X},
\end{equation}
where {$\Vert.\Vert_1$} is the $1$-norm, and {$q,\beta \in \mathbb{R}$} satisfy the conditions {$q \neq 1$}, {$q \neq 2$} and {$\beta > 0$}. 
As before, for {$1<q<2$}, the kernel can be written as
\begin{equation}
\label{qL_ker}
L_q(x,y) = \left(1 + \frac{(q-1)}{(2-q)\beta} \Vert{x-y}\Vert_1 \right)^{\frac{1}{1-q}}.
\end{equation}

\begin{figure}[b]
\centering
\includegraphics[width=0.45\textwidth]{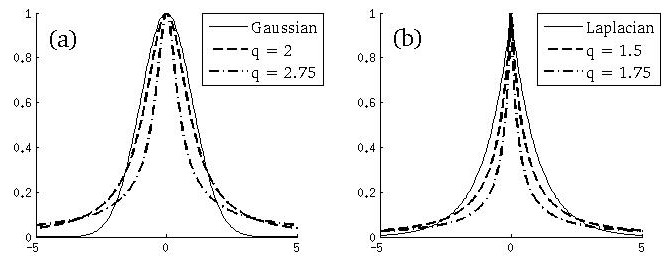}
\caption{Example plots for (a) $q$-Gaussian and (b) $q$-Laplacian kernels with $\sigma=\beta=1$.}
\label{example_plot}
\end{figure}

Due to the power-law tail of the Tsallis distributions for $q>1$, 
in case of the above kernels, similarity
decreases at a slower rate than the Gaussian and Laplacian kernels with increasing distance.
The rate of decrease in similarity is controlled by the parameter $q$, and leads to better performance
in some machine learning tasks, as shown later. 
Figure~\ref{example_plot} shows how the similarity decays 
for both $q$-Gaussian and $q$-Laplacian kernels in the one-dimensional case.
It can be seen that as $q$ increases, the initial decay becomes more rapid,
while towards the tails, the decay becomes slower.
 
We now show that for certain values of $q$, the proposed kernels satisfy the property of positive
definiteness, which is essential for them to be useful in learning theory.
Berg et al.~\cite{Berg_1984_book_Springer} 
have shown that for any symmetric kernel function $K:\mathcal{X}\times\mathcal{X}\mapsto\mathbb{R}$, there
exists a mapping {$\Phi:\mathcal{X}\mapsto\mathcal{H}$}, 
{$\mathcal{H}$} being a higher dimensional space,
such that
{$K(x,y) = \Phi(x)^T\Phi(y)$},	for all {$x,y\in\mathcal{X}$}
if and only if
$K$ is positive definite (p.d.), \textit{i.e.},
given any set of points {$\{x_1, x_2, \ldots, x_n\}\subset\mathcal{X}$},
the {$n\times n$} matrix {$\mathbb{K}$}, such that {$\mathbb{K}_{ij} = K(x_i,x_j)$},
is positive semi-definite. 

We first state some of the results presented in~\cite{Berg_1984_book_Springer}, which are required to prove 
positive definiteness of the proposed kernel.

\begin{lemma}
\label{PD_result1}
For a p.d. kernel {$\varphi:\mathcal{X} \times \mathcal{X} \mapsto \mathbb{R}$}, {$\varphi \geqslant 0$},
the following conditions are equivalent:
\begin{enumerate}
\item
{$-\log \varphi$} is negative definite (n.d.), and
\item
{$\varphi^t$} is p.d. for all {$t > 0$}.
\end{enumerate}
\end{lemma}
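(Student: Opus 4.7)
The plan is to view this lemma as essentially Schoenberg's theorem, which characterizes negative definite kernels via one-parameter semigroups of positive definite kernels: a symmetric $\psi:\mathcal{X}\times\mathcal{X}\to\mathbb{R}$ is n.d.\ if and only if $\exp(-t\psi)$ is p.d.\ for every $t>0$. Since this classical statement is itself available in~\cite{Berg_1984_book_Springer}, I would invoke it rather than reprove it, and treat the present lemma as a translation of Schoenberg's theorem through the bijection $\psi\leftrightarrow\varphi=e^{-\psi}$.

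For the implication $(1)\Rightarrow(2)$, I would set $\psi:=-\log\varphi$, which is real-valued and symmetric under the hypothesis $\varphi\geqslant 0$ (any boundary points with $\varphi=0$ can be handled by approximating $\varphi$ by $\varphi+\epsilon$ and passing to the limit, exploiting that both the p.d.\ and n.d.\ classes are closed under pointwise limits). Schoenberg's theorem applied to $\psi$ then gives $e^{-t\psi}=e^{t\log\varphi}=\varphi^{t}$ p.d.\ for all $t>0$, which is exactly statement~(2).

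For the converse $(2)\Rightarrow(1)$, I would use a limiting argument relying only on elementary closure properties of the n.d.\ cone, avoiding the hard direction of Schoenberg. For each fixed $t>0$: the constant kernel $1$ is n.d., since for $c_1,\dots,c_n$ with $\sum_i c_i=0$ the double sum equals $(\sum_i c_i)^2=0$; the hypothesis that $\varphi^{t}$ is p.d.\ makes $-\varphi^{t}$ n.d.; summing and rescaling by $1/t>0$, the kernel $\psi_t:=(1-\varphi^{t})/t$ is n.d. Using the expansion $a^{t}=1+t\log a+O(t^2)$ for $a>0$, we have $\psi_t(x,y)\to -\log\varphi(x,y)$ pointwise as $t\to 0^{+}$; since the defining inequality $\sum_{i,j}c_ic_j\psi_t(x_i,x_j)\leqslant 0$ (for sequences summing to zero) is preserved under pointwise limits, $-\log\varphi$ is n.d.

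The main obstacle is the nontrivial forward direction of Schoenberg's theorem, which underlies $(1)\Rightarrow(2)$ and whose proof from first principles requires a L\'evy--Khintchine-type representation; granted that theorem, the remaining work is routine bookkeeping: stability of the n.d.\ cone under sums, nonnegative scalings and pointwise limits, plus the scalar identity $\lim_{t\to 0^{+}}(1-a^{t})/t=-\log a$ for $a>0$.
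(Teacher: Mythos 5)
The paper never proves this lemma: it is imported verbatim from Berg, Christensen and Ressel~\cite{Berg_1984_book_Springer} as a quoted result, and only its consequences are exercised (in Proposition~\ref{power_pd}). So there is no in-paper argument to compare against; what you have written is essentially the standard textbook proof of the cited result, and it is sound in its main lines. Your converse direction $(2)\Rightarrow(1)$ is complete and elementary: $\psi_t=(1-\varphi^t)/t$ is n.d.\ as a nonnegative combination of the n.d.\ kernels $1$ and $-\varphi^t$, the n.d.\ cone is closed under pointwise limits, and $(1-a^t)/t\to-\log a$ for $a>0$. The forward direction correctly reduces to Schoenberg's theorem via $\varphi^t=e^{-t(-\log\varphi)}$, which is the honest thing to do since that is where the real content lies. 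The one soft spot is your handling of points where $\varphi=0$: there $-\log\varphi=+\infty$, so $\psi$ is not real-valued as you first assert, and the proposed repair of replacing $\varphi$ by $\varphi+\epsilon$ is not justified as written for the direction $(1)\Rightarrow(2)$ --- it is not clear that $-\log(\varphi+\epsilon)$ inherits negative definiteness from $-\log\varphi$, so the approximation transfers the hypothesis in the wrong direction. The clean fix is either to work with extended-real-valued n.d.\ kernels (as Berg et al.\ do, with the convention $e^{-t\cdot\infty}=0$) or simply to assume $\varphi>0$; the latter suffices for everything in this paper, since Proposition~\ref{power_pd} applies the lemma to $\varphi=e^{-f}$ with $f$ finite, hence $\varphi>0$ strictly. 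Apart from that boundary-case imprecision, your proposal is correct and consistent with the source the paper relies on.
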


\begin{lemma}
\label{PD_result4}
Let {$\varphi:\mathcal{X} \times \mathcal{X} \mapsto \mathbb{R}$} be a n.d. kernel, which is 
strictly positive, then {$\frac{1}{\varphi}$} is p.d.
\end{lemma}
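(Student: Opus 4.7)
The plan is to represent $1/\varphi$ as a positive integral superposition of kernels that are already known to be positive definite, and then exploit closure of the class of p.d.\ kernels under such superpositions. The starting point is the Schoenberg-type correspondence underlying Lemma~\ref{PD_result1}: if $\varphi$ is negative definite, then $e^{-t\varphi}$ is positive definite for every $t>0$. Granting this, the key identity I would use is the elementary Laplace representation
\begin{equation*}
\frac{1}{\varphi(x,y)}=\int_0^\infty e^{-t\varphi(x,y)}\,dt,
\end{equation*}
which is valid pointwise precisely because $\varphi$ is strictly positive (so the integral converges for every fixed pair $(x,y)$).

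Next, I would fix an arbitrary finite set $\{x_1,\ldots,x_n\}\subset\mathcal{X}$ and scalars $c_1,\ldots,c_n\in\mathbb{R}$, form the quadratic form for $1/\varphi$, and bring the finite sum inside the integral (trivially justified by linearity of integration). This gives
\begin{equation*}
\sum_{i,j=1}^{n} c_i c_j\,\frac{1}{\varphi(x_i,x_j)}
=\int_0^\infty \left(\sum_{i,j=1}^{n} c_i c_j\,e^{-t\varphi(x_i,x_j)}\right)dt.
\end{equation*}
For each $t>0$, the inner quadratic form is non-negative, because $e^{-t\varphi}$ is p.d. Integrating a non-negative function over $(0,\infty)$ yields a non-negative value, which shows that the Gram matrix associated with $1/\varphi$ is positive semi-definite for every finite choice of points, i.e.\ $1/\varphi$ is p.d.

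The main obstacle is really just supplying the implication ``$\varphi$ n.d.\ $\Rightarrow$ $e^{-t\varphi}$ p.d.'' in the generality required here, since Lemma~\ref{PD_result1} as stated begins with a p.d.\ kernel and its logarithm rather than with an arbitrary strictly positive n.d.\ kernel. The way around this is to invoke Schoenberg's theorem from~\cite{Berg_1984_book_Springer} in its unrestricted form, which gives exactly the equivalence between negative definiteness of $\varphi$ and positive definiteness of each $e^{-t\varphi}$. Once that is secured, the rest is just the closure of the cone of p.d.\ kernels under pointwise integration against a positive measure, applied here to Lebesgue measure on $(0,\infty)$; no further estimates or constructions are needed.
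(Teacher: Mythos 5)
Your proof is correct. The paper itself does not prove Lemma~\ref{PD_result4} at all --- it is quoted without proof as an imported result from~\cite{Berg_1984_book_Springer} --- and your argument, writing $1/\varphi$ as the Laplace integral $\int_0^\infty e^{-t\varphi(x,y)}\,\mathrm{d}t$ (convergent by strict positivity), invoking Schoenberg's theorem to make each $e^{-t\varphi}$ positive definite, and passing the finite quadratic form through the integral, is precisely the classical proof given in that reference, so nothing further is needed.
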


We state the following proposition, which is a general result providing a method to generate p.d. power-law kernels,
given that their exponential counterpart is p.d.
\begin{proposition}
\label{power_pd}
Given a p.d. kernel {$\varphi:\mathcal{X} \times \mathcal{X} \mapsto \mathbb{R}$} of the form
{$\varphi(x,y) = \exp\big(-f(x,y)\big)$}, where {$f(x,y)\geqslant0$} for all {$x,y\in\mathcal{X}$},
the kernel {$\phi:\mathcal{X} \times \mathcal{X} \mapsto \mathbb{R}$} given by
\begin{equation}
\phi(x,y) = \big(1+cf(x,y)\big)^{k}, \quad \text{for all } x,y\in\mathcal{X},
\end{equation}
is p.d., provided the constants {$c$} and {$k$} satisfy the conditions
{$c>0$} and {$k<0$}.
\end{proposition}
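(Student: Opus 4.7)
The plan is to realise $\phi$ as a positive integral mixture of exponential kernels of the form $\exp(-sf)$ for $s>0$, each of which is p.d.\ by Lemma~\ref{PD_result1}, and then conclude by stability of positive-definiteness under positive combinations.

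First, I would apply Lemma~\ref{PD_result1} to the hypothesis: since $\varphi=\exp(-f)$ is nonnegative and p.d., the lemma yields that $\varphi^{s}=\exp(-sf)$ is p.d.\ for every $s>0$, so in particular $(x,y)\mapsto \exp(-tc\,f(x,y))$ is p.d.\ for every $t>0$. Second, I would invoke the Gamma-function representation $u^{-\alpha}=\Gamma(\alpha)^{-1}\int_0^\infty t^{\alpha-1}e^{-tu}\,dt$ (valid for $\alpha,u>0$). Writing $k=-\alpha$ with $\alpha>0$ and noting $1+cf(x,y)>0$ (since $c>0$ and $f\geqslant 0$), substitution of $u=1+cf(x,y)$ gives
\begin{equation*}
\phi(x,y) = \bigl(1+cf(x,y)\bigr)^{k} = \frac{1}{\Gamma(\alpha)}\int_{0}^{\infty} t^{\alpha-1} e^{-t}\, \exp\bigl(-tc\,f(x,y)\bigr)\, dt.
\end{equation*}
The closing step is to observe that, for any finite sample $\{x_1,\ldots,x_n\}\subset\mathcal{X}$, the Gram matrix of $\phi$ equals the integral of the Gram matrices of $\exp(-tc\,f)$ against the strictly positive weight $t^{\alpha-1}e^{-t}/\Gamma(\alpha)$; a positive integral combination of positive semi-definite matrices remains positive semi-definite, so $\phi$ is p.d.

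I do not foresee any serious obstacle along this route. The one care-point is spotting the Gamma-function representation, which is precisely what converts the power-law form into a positive mixture of exponentials and lets Lemma~\ref{PD_result1} be applied uniformly in the exponent $s=tc$. Interchanging the quadratic form with the integral is then routine: the integrand is dominated pointwise by the integrable weight $t^{\alpha-1}e^{-t}$, so dominated convergence justifies pulling the finite-dimensional quadratic form inside the integral sign.
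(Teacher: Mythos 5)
Your argument is correct, and it reaches the conclusion by a genuinely different route from the paper's. The paper stays entirely inside the negative-definiteness calculus of Berg et al.: from Lemma~\ref{PD_result1} it extracts that $f=-\log\varphi$ is n.d., notes that $1+cf$ is then n.d. and strictly positive, applies Lemma~\ref{PD_result4} to conclude that $1/(1+cf)$ is p.d., and invokes Lemma~\ref{PD_result1} a second time to pass to arbitrary negative powers. You use only the first ingredient --- that $\exp(-tcf)$ is p.d. for every $t>0$ --- and then the Gamma-integral identity $u^{-\alpha}=\Gamma(\alpha)^{-1}\int_0^\infty t^{\alpha-1}e^{-tu}\,\mathrm{d}t$ to exhibit $(1+cf)^{k}$ as a positive mixture of these exponential kernels; positive semi-definiteness survives the integration because the quadratic form over a finite sample is a finite sum and the weight $t^{\alpha-1}e^{-t}$ is integrable, exactly as you say. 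Your route dispenses with Lemma~\ref{PD_result4} and with the second (slightly delicate) application of Lemma~\ref{PD_result1}, and it is constructive: the mixture is precisely the Gamma-subordination (scale-mixture) picture of the $q$-Gaussian, from which one could also read off the spectral density $\rho$ of Section~\ref{rkhs} rather than computing it by term-by-term integration. The paper's route buys brevity and uniform reliance on the two quoted lemmas.

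One caveat, which applies equally to your proof and to the paper's: Lemma~\ref{PD_result1} is an equivalence between ``$-\log\varphi$ is n.d.'' and ``$\varphi^t$ is p.d. for all $t>0$''; neither condition follows from ``$\varphi$ is p.d.'' alone (that would be infinite divisibility of $\varphi$). You deduce the second condition, and the paper deduces the first, from the bare hypothesis. This is harmless for Corollaries~\ref{qG_pd} and~\ref{qL_pd}, where $\varphi^t$ is again a Gaussian or Laplacian kernel with rescaled bandwidth and hence p.d. for every $t>0$, but a fully rigorous statement of the proposition should assume that $\exp(-tf)$ is p.d. for all $t>0$, equivalently that $f$ is negative definite.
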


\begin{proof}
Since, {$\varphi$} is p.d., it follows from Lemma~\ref{PD_result1} that the kernel
{$f = -\log\varphi$} is n.d.
Thus, for any $c>0$, {$(1 + cf)$} is n.d., and as {$f\geqslant0$},
we can say {$(1 + cf)$} is strictly positive. So, application
of Lemma~\ref{PD_result4} leads to the fact that
{$\frac{1}{(1+cf)}$} is p.d.
Finally, using Lemma~\ref{PD_result1}, we can claim {$(1+cf)^{k}$} is p.d.
for all {$c>0$} and {$k<0$}.
\end{proof}

From Proposition~\ref{power_pd} and positive definiteness of Gaussian and Laplacian kernels, 
we can show that the proposed $q$-Gaussian and $q$-Laplacian kernels are p.d. for certain ranges of $q$.
However, strikingly, it turns out that over this range, the kernels exhibit power-law behavior.

\begin{corollary}
\label{qG_pd}
For {$1<q<3$}, the $q$-Gaussian kernel, as defined in~\eqref{qG_ker}, is positive definite.
\end{corollary}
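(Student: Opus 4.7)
The plan is to apply Proposition~\ref{power_pd} directly with the standard Gaussian kernel as the seed kernel. The point is that $G_q$ is constructed so that, upon rewriting, it fits the template $(1+cf)^k$ from the proposition with $c>0$ and $k<0$ precisely when $1<q<3$.

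Concretely, I would first set $f(x,y) = \Vert x-y\Vert_2^2$, which is non-negative on $\mathcal{X}\times\mathcal{X}$, and take $\varphi(x,y) = \exp(-f(x,y))$. This $\varphi$ is the ordinary Gaussian kernel (with bandwidth absorbed into $f$), and its positive definiteness is a classical fact from Berg et al.~\cite{Berg_1984_book_Springer}, so the hypotheses of Proposition~\ref{power_pd} are met. Second, I would identify the constants in the statement of the corollary: by inspection of~\eqref{qG_ker},
\begin{equation*}
G_q(x,y) = \bigl(1 + c\, f(x,y)\bigr)^{k}, \qquad c = \frac{q-1}{(3-q)\sigma^2}, \quad k = \frac{1}{1-q}.
\end{equation*}

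The final step is simply to verify the sign conditions required by the proposition for $1<q<3$. In this range, $q-1>0$ and $3-q>0$, while $\sigma^2>0$, so $c>0$; and $1-q<0$ forces $k<0$. Proposition~\ref{power_pd} then immediately yields that $G_q$ is positive definite. There is no real obstacle here: all the analytic content sits inside Lemmas~\ref{PD_result1} and~\ref{PD_result4} and the proposition derived from them, and the corollary reduces to checking that the chosen $c$ and $k$ have the correct signs on the interval $1<q<3$.
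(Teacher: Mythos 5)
Your proposal is correct and matches the paper's intended argument: the corollary is stated as an immediate consequence of Proposition~\ref{power_pd} together with the positive definiteness of the Gaussian kernel, which is exactly the instantiation you give (with $f=\Vert x-y\Vert_2^2$, $c=\tfrac{q-1}{(3-q)\sigma^2}>0$ and $k=\tfrac{1}{1-q}<0$ for $1<q<3$). No gaps.
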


\begin{corollary}
\label{qL_pd}
For {$1<q<2$}, the $q$-Laplacian kernel, as defined in~\eqref{qL_ker}, is positive definite for all {$\beta>0$}.
\end{corollary}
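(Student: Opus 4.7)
The plan is to realize $L_q$ as an instance of the kernel $\phi(x,y) = (1+cf(x,y))^k$ from Proposition~\ref{power_pd}, with the standard Laplacian kernel playing the role of $\varphi$. Setting $f(x,y) = \Vert x-y \Vert_1$, which is nonnegative on $\mathcal{X}\times\mathcal{X}$, and comparing with~\eqref{qL_ker}, one reads off $c = \frac{q-1}{(2-q)\beta}$ and $k = \frac{1}{1-q}$. For $1<q<2$ and $\beta>0$ we have $q-1>0$, $2-q>0$, and $\beta>0$, so $c>0$; also $1-q<0$, so $k<0$. These are precisely the sign conditions required to apply Proposition~\ref{power_pd}.

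It then remains only to verify that $\varphi(x,y) = \exp(-f(x,y)) = \exp(-\Vert x-y\Vert_1)$ is positive definite. This is the Laplacian kernel with unit scale, and its positive definiteness is classical; it follows, for example, from the factorization $\exp(-\Vert x-y\Vert_1) = \prod_{i=1}^{N}\exp(-|x_i-y_i|)$ together with the fact that $\exp(-|u-v|)$ is p.d. on $\mathbb{R}$ (equivalently, $|u-v|$ is conditionally negative definite on $\mathbb{R}$, which can be shown via the Bochner / L\'evy--Khintchine representation), combined with the fact that pointwise products of p.d. kernels are p.d.

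With the sign conditions on $c$ and $k$ checked and $\varphi$ shown to be p.d., a direct appeal to Proposition~\ref{power_pd} delivers positive definiteness of $L_q(x,y) = (1+cf(x,y))^k$ for all $1<q<2$ and all $\beta>0$. \emph{The main obstacle,} such as it is, lies not in this corollary but in Proposition~\ref{power_pd} itself, whose chain of Lemma~\ref{PD_result1}--Lemma~\ref{PD_result4} applications has already been carried out; the present corollary is essentially bookkeeping, namely matching the parameters $(c,k)$ to the allowed regime of $(q,\beta)$ and invoking the well-known positive definiteness of the Laplacian kernel.
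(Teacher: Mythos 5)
Your proposal is correct and matches the paper's own (implicit) argument exactly: the paper derives Corollary~\ref{qL_pd} by applying Proposition~\ref{power_pd} with $f(x,y)=\Vert x-y\Vert_1$, $c=\frac{q-1}{(2-q)\beta}>0$, $k=\frac{1}{1-q}<0$, and the known positive definiteness of the Laplacian kernel. Your extra justification of why $\exp(-\Vert x-y\Vert_1)$ is p.d.\ is a harmless elaboration of a fact the paper simply cites as classical.
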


Now, we show that some of the popular kernels can be obtained as special cases of the proposed kernels.
The Gaussian kernel is defined as
\begin{equation}
\label{psi1}
\psi_1(x,y) = \exp\left(- \frac{\Vert{x-y}\Vert_2^2}{2\sigma^2} \right), 
\end{equation}
where {$\sigma\in\mathbb{R}$}, {$\sigma>0$}. We 
can retrieve the Gaussian kernel~\eqref{psi1} when {$q\to1$} in~the $q$-Gaussian kernel~\eqref{qG_ker}.
The Rational Quadratic kernel is of the form
\begin{equation}
\label{psi2}
\psi_2(x,y) = \left(1 - \frac{\Vert{x-y}\Vert_2^2}{\Vert{x-y}\Vert_2^2 + c} \right), 
\end{equation}
where {$c\in\mathbb{R}$}, {$c>0$}. Substituting {$q=2$} in~\eqref{qG_ker}, we obtain~\eqref{psi2}
with {$c=\sigma^2$}. 
The Laplacian kernel is defined as
\begin{equation}
\label{psi3}
\psi_3(x,y) = \exp\left(- \frac{\Vert{x-y}\Vert_1}{\sigma} \right), 
\end{equation}
where {$\sigma\in\mathbb{R}$}, {$\sigma>0$}. We 
can retrieve~\eqref{psi3} as {$q\to1$} in~the $q$-Laplacian kernel~\eqref{qL_ker}.

\section{Reproducing Kernel Hilbert Space}
\label{rkhs}

As discussed earlier, kernels map the data points to
a higher dimensional feature space, also called the Reproducing Kernel Hilbert Space (RKHS)
that is unique for each positive definite kernel~\cite{Aronszajn_1950_jour_AMSTrans}.
The significance of RKHS
for support vector kernels using Bochner's theorem~\cite{Bochner_1959_book_Princeton}, which provides
a RKHS in Fourier space for translation invariant kernels, is stated in~\cite{Smola_1998_jour_NeuNet}.
Other approaches also exist that lead to explicit description
of the Gaussian kernel~\cite{Steinwart_2006_jour_ITTrans}, but this approach does not work
for the proposed kernels as Taylor series expansion of the $q$-exponential
function~\eqref{q_exp} does not converge for {$q>1$}. So, we follow the Bochner's approach.  

We state Bochner's theorem, and then use the method presented in~\cite{Hofmann_2008_jour_AnnStat} 
to show how it can be used to construct the RKHS for a p.d. kernel.

\begin{theorem}[\textbf{Bochner}]
\label{PD_result7}
A continuous kernel {$\varphi(x, y) = \varphi(x-y)$} on {$\mathbb{R}^d$} is positive definite if
and only if {$\varphi(t)$} is the Fourier transform of a non-negative measure,
i.e., there exists {$\rho\geqslant0$} such that 
{$\rho(\omega)$} is the inverse Fourier transform of {$\varphi(t)$}.
\end{theorem}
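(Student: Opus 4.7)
The plan is to handle the two directions of the equivalence separately. For the easy direction (a Fourier-transform representation implies positive definiteness), the strategy is a direct calculation. Given a finite configuration $x_1, \ldots, x_n \in \mathbb{R}^d$ with coefficients $c_1, \ldots, c_n \in \mathbb{C}$, substitute $\varphi(t) = \int e^{i\omega \cdot t}\, d\rho(\omega)$ into the quadratic form $\sum_{i,j} c_i \overline{c_j}\, \varphi(x_i - x_j)$; exchanging the finite sum with the integral rewrites it as $\int \bigl|\sum_i c_i e^{i\omega \cdot x_i}\bigr|^2 d\rho(\omega)$, which is manifestly non-negative because $\rho \geqslant 0$.

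For the converse, the first step is to handle the case in which $\varphi$ is itself integrable. Extending positive definiteness from point masses to Schwartz densities by approximation yields $\iint \varphi(x-y) f(x)\overline{f(y)}\,dx\,dy \geqslant 0$ for every $f$ in the Schwartz class; a Parseval-type computation then recasts the left-hand side as $\int \widehat{\varphi}(\omega)\, |\widehat{f}(\omega)|^2\,d\omega$. Since the functions of the form $|\widehat{f}|^2$ form a sufficiently rich cone of non-negative test functions, one concludes $\widehat{\varphi}(\omega) \geqslant 0$ pointwise, so $\rho = \widehat{\varphi}$ is the desired non-negative measure.

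To reduce the general continuous case to the integrable one, I would regularize by setting $\varphi_\epsilon(t) = \varphi(t)\, e^{-\epsilon \Vert t\Vert^2}$ for small $\epsilon > 0$. Since the Schur (pointwise) product of two p.d.\ kernels is p.d., and the bound $|\varphi(t)| \leqslant \varphi(0)$ is a standard consequence of positive definiteness and continuity, each $\varphi_\epsilon$ is both integrable and positive definite, hence equals the Fourier transform of some non-negative density $\rho_\epsilon$ by the previous step. The total masses are uniformly bounded by $\int d\rho_\epsilon = \varphi_\epsilon(0) = \varphi(0)$, and a tightness argument using the continuity of $\varphi$ at the origin (via the identity $\int (1 - \cos(\omega \cdot \xi))\, d\rho_\epsilon(\omega) = \varphi_\epsilon(0) - \mathrm{Re}\,\varphi_\epsilon(\xi)$ integrated over small $\xi$) lets Prokhorov's theorem extract a weakly convergent subsequence with limit $\rho \geqslant 0$; passing to the limit in the Fourier-inversion identity then gives $\varphi(t) = \int e^{i\omega \cdot t}\, d\rho(\omega)$. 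The main obstacle is precisely this last step: verifying tightness of $\{\rho_\epsilon\}$ and justifying the interchange of limits inside an oscillatory integral, while the positive-definiteness of Schur products and the pointwise bound $|\varphi|\leqslant \varphi(0)$ are the routine ingredients.
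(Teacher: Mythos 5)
The paper does not prove this statement at all: it is quoted as Bochner's classical theorem, attributed to the cited monograph of Bochner, and used only as a black box to describe the RKHS of the proposed kernels. So there is no in-paper argument to compare against, and your task was really to reconstruct a textbook proof. Your sketch is the standard one and its architecture is sound: the direct computation for the easy direction, the Parseval argument for integrable $\varphi$, and the Gaussian regularization $\varphi_\epsilon(t)=\varphi(t)e^{-\epsilon\Vert t\Vert^2}$ (Schur product plus $|\varphi|\leqslant\varphi(0)$) to reduce the general case. Two points deserve tightening. First, in the integrable case, concluding $\widehat{\varphi}\geqslant 0$ pointwise does not yet give you a finite measure: you must also show $\widehat{\varphi}\in L^1$ with $\int\widehat{\varphi}=\varphi(0)$, which follows from non-negativity together with Fourier inversion against a Gaussian approximate identity. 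You implicitly use this when you later assert $\int\mathrm{d}\rho_\epsilon=\varphi_\epsilon(0)$, so as written that identity is invoked before it is justified. Second, the tightness-and-limit-interchange step you single out as the main obstacle can be bypassed entirely: since $\varphi_\epsilon\to\varphi$ pointwise, the masses are uniformly bounded, and $\varphi$ is continuous at the origin, L\'evy's continuity theorem gives weak convergence of $\rho_\epsilon$ to a finite non-negative measure $\rho$ whose Fourier transform is $\varphi$, with no separate Prokhorov argument needed. Finally, note that the statement as printed in the paper tacitly assumes the measure has a density $\rho(\omega)$ and is loose about Fourier transform versus inverse transform; the correct general statement, and what your proof actually delivers, involves a non-negative finite Borel measure, with the density form holding for the paper's specific kernels because they are integrable.
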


Then, the RKHS of the kernel {$\varphi$} is given by
\begin{equation}
\mathcal{H}_\varphi = \left\{ f\in \mathit{L}^2(\mathbb{R}) \bigg\vert \int\limits_{-\infty}^{\infty} 
\left.\frac{\vert\hat{f}(\omega)\vert^2}{\rho(\omega)}\mathrm{d}\omega\right. <\infty\right\}
\end{equation}
with the inner product defined as
\begin{equation}
\langle{f,g}\rangle_\varphi = \int\limits_{-\infty}^{\infty} 
\frac{\hat{f}(\omega)\overline{\hat{g}(\omega)}}{\rho(\omega)}\mathrm{d}\omega,
\end{equation}
where {$\hat{f}(\omega)$} is the Fourier transform of {$f(t)$} and 
{$\mathit{L}^2(\mathbb{R})$} is set of all functions on {$\mathbb{R}$}, square integrable with respect to the Lebesgue measure.

It must be noted here that in our case, the existence and non-negativity of 
the inverse Fourier transform $\rho$ is obvious
due to the positive definiteness of the proposed kernels (Corollaries~\ref{qG_pd} and~\ref{qL_pd}).
Hence, to describe the RKHS it is enough to determine an expression for $\rho$ for both the kernels.
We define the functions
corresponding to the $q$-Gaussian and $q$-Laplacian kernels, respectively, as
\begin{equation}
\varphi_G(t) = \left(1 + \frac{(q-1)}{(3-q)\sigma^2}\sum_{j=1}^{N}t_j^2 \right)^{\frac{1}{1-q}}, 
\quad q\in(1,3), 
\end{equation}
and
\begin{equation}
\varphi_L(t) = \left(1 + \frac{(q-1)}{(2-q)\beta}\sum_{j=1}^{N}|t_j|\right)^{\frac{1}{1-q}}, 
\quad q\in(1,2),
\end{equation}
where {$\beta,\sigma\in\mathbb{R}$}, {$\beta>0$} and {$t = (t_1,\ldots,t_N) \in \mathbb{R}^N$}.
We derive expressions for their inverse Fourier transforms {$\rho_G(\omega)$} and {$\rho_L(\omega)$}, respectively.
For this, we require a technical result~\cite[Eq. 4.638(3)]{Gradshteyn_1994_book_Elesevier},
which is stated in the following lemma.

\begin{lemma}
\label{PD_result8}
Let $s\in(0,\infty)$ and $p_i,q_i,r_i\in(0,\infty)$ for $i=1,2,\ldots,N$
be constants, then the $N$-dimensional integral
\begin{align*}
 \int_0^{\infty}\int_0^{\infty}\ldots\int_0^{\infty}
 &\frac{\prod_{i=1}^{N} x_i^{p_i-1}}{\left(1+\sum_{i=1}^{N} (r_ix_i)^{q_i}\right)^s}
 \mathrm{d}x_1 \mathrm{d}x_2 \ldots \mathrm{d}x_N
 \\=&
 \frac{\Gamma\left(s - \sum_{i=1}^{N} \frac{p_i}{q_i}\right)}{\Gamma(s)}
 \prod_{i=1}^{N} \left(\frac{\Gamma\left(\frac{p_i}{q_i}\right)}{q_ir_i^{p_iq_i}}\right)\;.
\end{align*}
\end{lemma}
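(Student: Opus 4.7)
The plan is to reduce the multidimensional integral to a product of one-dimensional Gamma integrals via two maneuvers: first a change of variables that normalizes the $r_i$ and $q_i$ out of the integrand, and then the classical ``Gamma trick'' $(1+y)^{-s} = \Gamma(s)^{-1}\int_0^\infty t^{s-1}e^{-(1+y)t}\,\mathrm{d}t$, which decouples the $u_i$'s so that the remaining integrals factor.

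First I would perform, in each coordinate, the substitution $u_i = (r_i x_i)^{q_i}$, whose Jacobian gives $x_i^{p_i-1}\,\mathrm{d}x_i = (q_i)^{-1} r_i^{-p_i} u_i^{p_i/q_i - 1}\,\mathrm{d}u_i$. Setting $a_i := p_i/q_i$ and pulling the prefactors outside, the integral reduces to
\begin{equation*}
\left(\prod_{i=1}^N \frac{1}{q_i r_i^{p_i}}\right) \int_0^\infty\!\!\cdots\!\int_0^\infty \frac{\prod_{i=1}^N u_i^{a_i-1}}{\bigl(1 + \sum_{i=1}^N u_i\bigr)^s}\,\mathrm{d}u_1\cdots\mathrm{d}u_N,
\end{equation*}
so it remains to evaluate this canonical Dirichlet-type integral.

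For the canonical integral, I would write
\begin{equation*}
\frac{1}{(1 + \sum_i u_i)^s} = \frac{1}{\Gamma(s)}\int_0^\infty t^{s-1} e^{-t} \prod_{i=1}^N e^{-t u_i}\,\mathrm{d}t,
\end{equation*}
and then apply Fubini (permissible since the entire integrand is non-negative) to swap the $t$-integral with the $u_i$-integrals. Each inner integral separates into $\int_0^\infty u_i^{a_i-1} e^{-t u_i}\,\mathrm{d}u_i = \Gamma(a_i) t^{-a_i}$, and the remaining $t$-integral collapses to $\int_0^\infty t^{s - 1 - \sum_i a_i} e^{-t}\,\mathrm{d}t = \Gamma(s - \sum_i a_i)$. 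Multiplying these together yields $\Gamma(s)^{-1}\,\Gamma(s - \sum_i p_i/q_i)\,\prod_i \Gamma(p_i/q_i)$; combining with the Jacobian prefactor produces the claimed formula.

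The only real subtlety is the convergence of the final $t$-integral, which requires the implicit hypothesis $s > \sum_{i=1}^N p_i/q_i$; under this condition all the manipulations (Fubini and the Gamma identities) are immediately justified by non-negativity of the integrand, so there is no genuine analytic obstacle — the work is simply to track the constants from the change of variables carefully.
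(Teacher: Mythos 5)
Your argument is correct, but note that the paper does not prove this lemma at all --- it is quoted as entry 4.638(3) of Gradshteyn--Ryzhik --- so any derivation is necessarily a different route from the paper's. The route you chose (substitute $u_i=(r_ix_i)^{q_i}$ to reduce to the canonical Dirichlet-type integral, then use $(1+y)^{-s}=\Gamma(s)^{-1}\int_0^\infty t^{s-1}e^{-(1+y)t}\,\mathrm{d}t$ with Tonelli to factor it into one-dimensional Gamma integrals) is the standard one and is carried out correctly, including your observation that the hypothesis $s>\sum_{i}p_i/q_i$ is needed for convergence and is missing from the statement. One substantive point deserves emphasis: your (correct) bookkeeping yields the prefactor $\prod_{i}\Gamma(p_i/q_i)/\bigl(q_i r_i^{p_i}\bigr)$, whereas the lemma as printed has $r_i^{p_iq_i}$ in the denominator. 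The one-dimensional check $N=1$, $p_1=1$, $q_1=2$, $r_1=2$, $s=1$ gives $\int_0^\infty\bigl(1+4x^2\bigr)^{-1}\mathrm{d}x=\pi/4$, which matches $r_1^{-p_1}=1/2$ and not $r_1^{-p_1q_1}=1/4$; so the exponent in the printed statement is a typo and your version is the right one (the discrepancy also propagates into the powers of $c$ appearing in the proof of Proposition~\ref{qG_fourier}). In short: your proof is sound and, unlike the citation, self-contained; just flag explicitly that you are proving the corrected form of the identity under the additional convergence hypothesis.
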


We now derive the inverse Fourier transforms. We prove the result for Proposition~\ref{qG_fourier}.
The proof of Proposition~\ref{qL_fourier} proceeds similarly.

\begin{proposition}
\label{qG_fourier}
The inverse Fourier transform for {$\varphi_G(t)$} is given by 
\begin{align}
\rho_G&(\omega) = 
\frac{1}{\left(\frac{\sqrt{2}(q-1)}{(3-q)\sigma^2}\right)^N \Gamma\left(\frac{1}{q-1}\right)} 
\times \nonumber \\&
\sum_{b=0}^{\infty} \frac{(-1)^b}{b!} \Gamma\left(\frac{1}{q-1}-\frac{N}{2}-b\right)
\left(\frac{(3-q)\sigma^2\Vert{\omega}\Vert_2}{2(q-1)}\right)^{2b}.
\end{align}
\end{proposition}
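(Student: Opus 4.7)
The plan is to compute $\rho_G(\omega) = (2\pi)^{-N/2}\int_{\mathbb{R}^N}\varphi_G(t)\,e^{i\omega\cdot t}\,dt$ by expanding the complex exponential as a power series and using Lemma~\ref{PD_result8} to evaluate each resulting moment integral of $\varphi_G$. Concretely, I would write
\begin{equation*}
\rho_G(\omega) \;=\; \frac{1}{(2\pi)^{N/2}} \sum_{k\geqslant 0} \frac{i^k}{k!} \int_{\mathbb{R}^N} (\omega\cdot t)^k\, \varphi_G(t)\, dt
\end{equation*}
and then multinomially expand $(\omega\cdot t)^k = \sum_{|\alpha|=k}\binom{k}{\alpha}\omega^{\alpha}t^{\alpha}$. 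Since $\varphi_G$ is symmetric under each sign flip $t_j \mapsto -t_j$, every monomial $t^{\alpha}$ with some odd $\alpha_j$ integrates to zero. Thus only even orders $k=2b$ survive, with $\alpha=2\gamma$ and $|\gamma|=b$, and $i^{2b}=(-1)^b$ produces the alternating sign in the target expression.

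Next, I would use the same symmetry to restrict each surviving integral to the positive orthant at the cost of a factor $2^N$, and apply Lemma~\ref{PD_result8} with $p_j = 2\gamma_j+1$, $q_j=2$, $r_j = \sqrt{(q-1)/((3-q)\sigma^2)}$, and $s = 1/(q-1)$. This produces the factor $\Gamma\!\bigl(\tfrac{1}{q-1}-\tfrac{N}{2}-b\bigr)/\Gamma\!\bigl(\tfrac{1}{q-1}\bigr)$ shared by every summand, together with the prefactor $\bigl(\sqrt{2}(q-1)/((3-q)\sigma^2)\bigr)^{-N}$ (after collecting the $2^N$ and the $r_j$-powers whose $\gamma$-independent part contributes $N/2$ and whose $\gamma$-dependent part supplies the remaining powers of $(3-q)\sigma^2/(q-1)$ needed for the summand).

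The combinatorial core, which I expect to be the main obstacle, is showing that
\begin{equation*}
\sum_{|\gamma|=b}\binom{2b}{2\gamma_1,\ldots,2\gamma_N}\prod_{j=1}^{N}\Gamma\!\left(\gamma_j+\tfrac{1}{2}\right)\omega_j^{2\gamma_j} \;=\; \frac{(2b)!\,\pi^{N/2}}{4^{b}\,b!}\,\Vert\omega\Vert_2^{2b}.
\end{equation*}
I would establish this by invoking the duplication identity $\Gamma(\gamma_j+\tfrac12) = \sqrt{\pi}\,(2\gamma_j)!/(4^{\gamma_j}\gamma_j!)$, which cancels the $(2\gamma_j)!$ in the multinomial coefficient and recombines the remaining pieces via the ordinary multinomial formula $\sum_{|\gamma|=b}\binom{b}{\gamma}\prod_j\omega_j^{2\gamma_j} = (\omega_1^2+\cdots+\omega_N^2)^b$. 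Plugging this identity back, the $(2b)!$ cancels the one coming from the Taylor expansion, the $\pi^{N/2}$ combines with $(2\pi)^{-N/2}$ to leave $2^{-N/2}$, and the powers of $(3-q)\sigma^2/(2(q-1))$ arrange themselves into $\bigl((3-q)\sigma^2\Vert\omega\Vert_2/(2(q-1))\bigr)^{2b}$, yielding exactly the claimed expression.

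Finally, I would briefly note that the interchange of summation and integration is justified because $\varphi_G$ has all polynomial moments finite for $q\in(1,3)$ (indeed, $\varphi_G(t)$ decays as $\Vert t\Vert_2^{-2/(q-1)}$ at infinity and $1/(q-1) > N/2$ holds at least on a subrange that covers the interesting cases; on the boundary one argues by analytic continuation in $q$). The $q$-Laplacian case then follows by the same scheme with $q_j=1$ and $r_j=(q-1)/((2-q)\beta)$ in Lemma~\ref{PD_result8}, giving an analogous series expansion.
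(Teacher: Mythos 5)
Your proposal follows essentially the same route as the paper's own proof: expand the complex exponential in a power series, use the sign-flip symmetry of $\varphi_G$ to kill the odd terms and reduce to the positive orthant, apply Lemma~\ref{PD_result8} termwise with $p_j=2\gamma_j+1$, $q_j=2$, $s=\frac{1}{q-1}$, and recombine via $\Gamma(\gamma_j+\tfrac12)=\sqrt{\pi}\,(2\gamma_j)!/(4^{\gamma_j}\gamma_j!)$ and the multinomial theorem into powers of $\Vert\omega\Vert_2^2$; the only cosmetic difference is that the paper factors $e^{\mathrm{i}t\cdot\omega}$ into a product of cosines and expands each one, whereas you Taylor-expand $(\omega\cdot t)^k$ directly, which yields the identical sum. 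One small caveat: your closing claim that all polynomial moments of $\varphi_G$ are finite is not literally true (the $2b$-th moment diverges once $b\geqslant \frac{1}{q-1}-\frac{N}{2}$, which is exactly where $\Gamma(\frac{1}{q-1}-\frac{N}{2}-b)$ leaves the range of validity of Lemma~\ref{PD_result8}), but the paper's own proof performs the same formal termwise integration without comment, so this does not set your argument apart from theirs.
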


\begin{proof}
 By definition,
\begin{equation}
 \rho_G(\omega) = (2\pi)^{-N/2}
 \int_{\mathbb{R}^N} \exp(\mathrm{i} t\cdot\omega) \varphi_G(t) \mathrm{d}t\;.
\end{equation}
Expanding the exponential term, we have
\begin{align*}
 \exp(\mathrm{i}t\cdot\omega) 
 = \prod_{j=1}^{N} \big( \cos(t_j\omega_j) + \mathrm{i}\sin(t_j\omega_j)\big)\;.
\end{align*}
Since, both {$\cos(t_j\omega_j)$} are {$\varphi_G(t)$} are even functions for every $t_j$,
while {$\sin(t_j\omega_j)$} is an odd function, hence integrating over {$\mathbb{R}^{N}$},
all terms with a sin component become zero. Further, the remaining term is odd, and hence,
the integral is same in every orthant. So the expression reduces to
\begin{align}
 &\rho_G(\omega) = \nonumber \\
 &\left(\frac{2}{\pi}\right)^{\frac{N}{2}}
 \int\limits_{0}^{\infty} .. \int\limits_{0}^{\infty}
 \bigg(1 + c\sum_{j=1}^{N}t_j^2 \bigg)^{\frac{1}{1-q}} \prod_{j=1}^{N} \cos(t_j\omega_j) \mathrm{d}t_1 ... \mathrm{d}t_N
\label{rhoG_step1}
\end{align}
where {$c=\frac{(q-1)}{(3-q)\sigma^2}$}. Each of the cosine term can be expanded in form of an infinite series as
\begin{displaymath}
 \cos(t_j\omega_j) = \sum_{m_j=0}^{\infty} (-1)^{m_j} \frac{\omega_j^{2m_j} t_j^{2m_j}}{(2m_j)!}\;.
\end{displaymath}
Substituting in~\eqref{rhoG_step1} and using Lemma~\ref{PD_result8}, we obtain
\begin{align}
 &\rho_G(\omega) = \frac{1}{\left(\sqrt{2\pi}c\right)^{N} \Gamma\left(\frac{1}{q-1}\right)} \times\nonumber \\
 & \sum_{m_1,..,m_N = 0}^{\infty} \left(-\frac{1}{c^2}\right)^{\sum\limits_{j=1}^{N} m_j} \Gamma\bigg(\frac{1}{q-1} -
 \frac{N}{2} - \sum_{j=1}^{N} m_j \bigg)
 g(\omega)
\label{rhoG_step2}
\end{align}
where 
\begin{equation}
 g(\omega) = \prod_{j=1}^{N} \frac{\omega_j^{2m_j} \Gamma\left(m_j + \frac{1}{2}\right)}{(2m_j)!}\;.
 \label{rhoG_step3}
\end{equation}
Using expansion of gamma function for half integers, we can write~\eqref{rhoG_step3} as
\begin{equation}
 g(\omega) = \pi^{N/2}\prod_{j=1}^{N} \frac{\omega_j^{2m_j}}{4^{m_j} m_j!}\;.
\end{equation}
Substituting in~\eqref{rhoG_step2} and using {$b=\sum_{j=1}^{N} m_j$}, we have
\begin{align}
 &\rho_G(\omega) = \frac{1}{\left(\sqrt{2}c\right)^{N} \Gamma\left(\frac{1}{q-1}\right)} \times\nonumber \\
 & \sum_{b = 0}^{\infty} \left(\frac{-1}{4c^2}\right)^{b} \Gamma\left(\frac{1}{q-1} - \frac{N}{2} - b\right)
 \sum_{{m_1,..,m_N}\atop{\sum\limits_{k=1}^{N} m_k =b}}
 \frac{\omega_1^{2m_1} ... \omega_N^{2m_N}}{m_1! ... m_N!}
 \label{rhoG_step4}
\end{align}
We arrive at the claim by observing that the terms in the inner summation in~\eqref{rhoG_step4} are similar to 
terms of multinomial expansion of {$\frac{1}{b!} \left(\omega_1^2 + \ldots + \omega_N^2\right)^b$}. 
\end{proof}

It can be observed that the above result agrees with the fact that inverse Fourier
transform of radial functions are radial in nature.
We present corresponding result for $q$-Laplacian kernel. 

\begin{proposition}
\label{qL_fourier}
The inverse Fourier transform for {$\varphi_L(t)$} is given by 
\begin{align}
\rho_L &(\omega) = 
\frac{1}{\left(\frac{\sqrt{\pi}(q-1)}{(2-q)\beta\sqrt{2}}\right)^N \Gamma\left(\frac{1}{q-1}\right)} 
\times \nonumber \\&
\sum_{b=0}^{\infty} (-1)^b\Gamma\left(\frac{1}{q-1}-N-2b\right)
\left(\frac{(2-q)\beta}{(q-1)}\right)^{2b} g_b(\omega),
\end{align}
where 
\begin{equation}
g_b(\omega) = \sum_{{m_1,\ldots,m_N\in\mathbb{N}}\atop{\sum\limits_{k=1}^{N} m_k =b}}
\omega_1^{2m_1} \omega_2^{2m_2} \ldots \omega_N^{2m_N} 
\end{equation}
with
{$\omega_1,\ldots,\omega_N$} being the components of {$\omega$}.
\end{proposition}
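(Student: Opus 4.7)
The plan is to adapt the proof of Proposition~\ref{qG_fourier} to the $1$-norm setting. Starting from
\[
\rho_L(\omega) = (2\pi)^{-N/2}\int_{\mathbb{R}^N}\exp(\mathrm{i}t\cdot\omega)\,\varphi_L(t)\,\mathrm{d}t,
\]
I would write $\exp(\mathrm{i}t\cdot\omega)=\prod_{j=1}^N\bigl(\cos(t_j\omega_j)+\mathrm{i}\sin(t_j\omega_j)\bigr)$. Since $\varphi_L$ depends on $t$ only through $|t_1|,\ldots,|t_N|$, every summand containing a sine factor integrates to zero over $\mathbb{R}^N$, and the remaining integrand is even in each coordinate. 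The integral therefore collapses to $2^N$ times an integral over the positive orthant, on which $|t_j|=t_j$.

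Next, I would Taylor-expand each cosine as $\cos(t_j\omega_j)=\sum_{m_j\geqslant 0}(-1)^{m_j}\omega_j^{2m_j}t_j^{2m_j}/(2m_j)!$ and interchange the sum with the integral. The resulting integral fits the template of Lemma~\ref{PD_result8} with $p_i=2m_i+1$, $q_i=1$, $r_i=c:=(q-1)/((2-q)\beta)$ and $s=1/(q-1)$, so that $\sum_i p_i/q_i=2b+N$ with $b:=\sum_i m_i$. Applying the lemma yields
\[
\int_0^{\infty}\!\!\cdots\!\int_0^{\infty}\prod_j t_j^{2m_j}\Big(1+c\sum_j t_j\Big)^{-\frac{1}{q-1}}\!\mathrm{d}t=\frac{\Gamma\!\left(\tfrac{1}{q-1}-2b-N\right)}{\Gamma\!\left(\tfrac{1}{q-1}\right)}\prod_j \frac{(2m_j)!}{c^{2m_j+1}},
\]
after which the $(2m_j)!$ factors cancel those generated by the cosine expansion, leaving only $(-1)^{m_j}\omega_j^{2m_j}$ in each coordinate.

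Finally, I would regroup the $N$-fold sum over $(m_1,\ldots,m_N)$ by the total $b=\sum_i m_i$; the constrained inner sum is exactly $g_b(\omega)$, and the outer sum carries $(-1)^b$, $\Gamma(1/(q-1)-2b-N)$, and $c^{-2b}$. Consolidating the factor $(2/\pi)^{N/2}$ from the orthant reduction with the $c^{-N}$ extracted from the lemma produces the prefactor $\bigl(\sqrt{\pi}(q-1)/(\sqrt{2}(2-q)\beta)\bigr)^{-N}$ claimed in the proposition. The main \emph{obstacle} is really careful bookkeeping rather than a deep step: unlike the $q$-Gaussian case, the inner sum does not collapse to a single radial expression such as $\|\omega\|_2^{2b}/b!$, because the $1$-norm is not rotationally invariant, so the answer has to be left in terms of the symmetric polynomial $g_b(\omega)$. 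A minor technical point worth verifying is the interchange of summation and integration, which can be justified by the absolute convergence of the cosine series together with integrability of $\varphi_L$ for $q\in(1,2)$.
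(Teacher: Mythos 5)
Your proposal is correct and follows exactly the route the paper intends: the paper gives no separate proof of Proposition~\ref{qL_fourier}, stating only that it ``proceeds similarly'' to Proposition~\ref{qG_fourier}, and your adaptation (parity reduction to the positive orthant, cosine expansion, Lemma~\ref{PD_result8} with $p_i=2m_i+1$, $q_i=1$, $s=1/(q-1)$, then regrouping by $b=\sum_i m_i$) is precisely that argument carried out in detail, with the constants checking out. Your remark that the inner sum cannot be collapsed radially, unlike the $\|\omega\|_2^{2b}/b!$ simplification in the Gaussian case, correctly identifies the only substantive difference.
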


\section{Performance Comparison}
\label{result}

In this section, we apply the $q$-Gaussian and $q$-Laplacian kernels in 
classification and regression. 
We provide insights into the behavior
of these kernels through examples. We also compare the 
performance of the kernels for different values of $q$, and also with the Gaussian, Laplacian (\textit{i.e.}, when $q\to1$),
and polynomial kernels 
using various data sets from UCI repository~\cite{Frank_2010_misc_UCI}. 
The simulations have been performed using LIBSVM~\cite{Chang_2011_jour_ACMTIST}.
Table~\ref{dataset} provides a description of the data sets used. 
The last few data sets have been used for regression.
\begin{table}[h]
\small
\centering
\begin{tabular}{|c|c|c|c|c|}
\hline
\multicolumn{2}{|c|}{Data Set}	& Class	& Attribute	& Instance	\\
\hline\hline
1	& Acute Inflammations	& 2		& 6		& 120		\\	
2	& Australian Credit$^*$	& 2		& 14		& 690		\\
3	& Blood Transfusion	& 2		& 4		& 748		\\
4	& Breast Cancer$^*$	& 2		& 9		& 699		\\
5	& Iris			& 3		& 4		& 150		\\
6	& Mammographic Mass	& 2		& 5		& 830		\\
7	& Statlog (Heart)$^*$	& 2		& 13		& 270		\\
8	& Tic-Tac-Toe		& 2		& 9		& 958		\\
9	& Vertebral Column	& 3		& 6		& 310		\\
10	& Wine$^*$		& 3		& 13		& 178		\\
\hline \hline
11	& Auto MPG		& --		& 8 		& 398		\\
12	& Servo			& --		& 4		& 167		\\
13	& Wine Quality (red)	& --		& 12		& 1599		\\
\hline
\end{tabular}
\caption{Data Sets {(sets marked $^*$ have been normalized)}. }
\label{dataset}
\end{table}

\subsection{Kernel SVM}

Support Vector Machines (SVMs) are one of the most 
important class of kernel machines. While linear SVMs, using inner product
as similarity measure, are quite common, other variants using various kernel
functions, mostly Gaussian, are also used in practice. Use of kernels leads to
non-linear separating hyperplanes, which sometimes provide better classification.
Now, we formulate a SVM based on the proposed kernels. For the $q$-Gaussian kernel~\eqref{qG_ker}, it leads to an 
optimization problem with the following dual form:
\begin{align*}
\min_{\alpha\in\mathbb{R}^n} &\sum_{i=1}^{n} \alpha_i - 
\frac{1}{2}\sum_{i,j=1}^{n} \alpha_i\alpha_j y_i y_j 
\exp_q \left(-\frac{\Vert{x_i-x_j}\Vert_2^2}{(3-q)\sigma^2}\right)
\label{svm_formula}
\end{align*}
subject to 
{$\alpha_i \geqslant0$}, {$i = 1,\ldots,n$}, and
{$\sum_{i=1}^{n} \alpha_i y_i = 0$},
where, {$\{x_1,\ldots,x_n\}\subset\mathcal{X}$} are the training data points
and {$\{y_1,\ldots,y_n\}\subset\{-1,1\}$} are the true classes. The optimization
problem for the $q$-Laplacian kernel~\eqref{qL_ker} can be formulated by using
{$\exp_q \left(-\frac{\Vert{x_i-x_j}\Vert_1}{(2-q)\beta}\right)$} in the above expression.

\begin{figure}[ht]
 \centering
\includegraphics[width=0.42\textwidth]{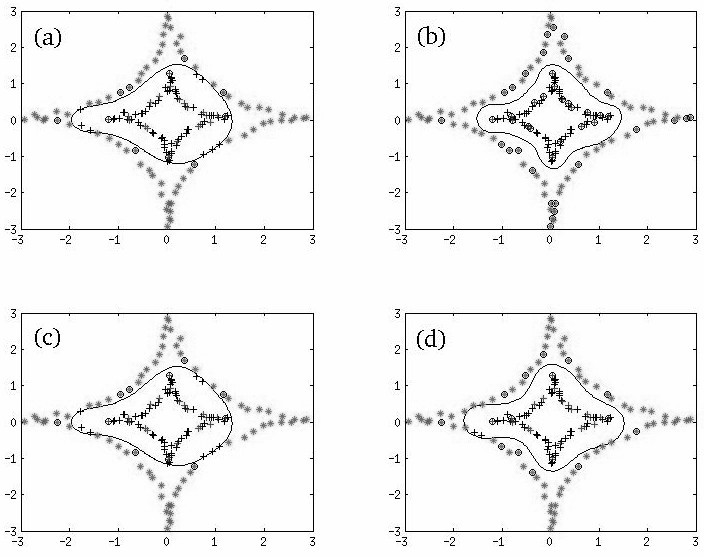}
\caption{Decision boundaries using (a) Gaussian, (b) $q$-Gaussian $(q=2.95)$, (c) Laplacian,
and (d) $q$-Laplacian $(q=1.95)$ kernel SVMs.}
 \label{svm_example}
\end{figure}

\begin{table*}[ht]
\small
\centering
\begin{tabular}{|c|c|c|c|c|c|c|c|c|c|c|c|c|}
\hline
\multicolumn{2}{|c|}{Data Sets}	& 1 & 2 & 3 & 4 & 5 & 6 & 7 & 8 & 9 & 10 \\
\hline
\multicolumn{2}{|c|}{Parameter ($\sigma = \sqrt{\beta}$)}	
&	10	&	15	&	5	&	5	&	2	&	25	&	5	&	1.5	&	50	&	1	\\
\hline \hline
\multicolumn{2}{|c|}{Gaussian $(q\to1)$}		
			&	86.67	&	76.96	&	\textbf{77.27}	&	96.63	&	97.33	&	79.28	&	82.96	&	\textbf{89.46}	&	\textbf{87.10}	&	97.19	\\
\hline 
\multirow{8}{*}{\begin{sideways} {\scriptsize $q$-Gaussian} \end{sideways}}		
&	$q = 1.25$	&	86.67	&	82.46	&	77.14	&	96.63	&	97.33	&	79.40	&	82.96	&	89.25	&	\textbf{87.10}	&	97.19	\\
&	$q = 1.50$	&	86.67	&	83.19	&	76.87	&	96.93	&	97.33	&	79.52	&	83.33	&	89.25	&	\textbf{87.10}	&	97.75	\\
&	$q = 1.75$	&	88.33	&	\textbf{86.38}	&	76.60	&	96.93	&	\textbf{98.00}	&	79.76	&	83.33	&	88.94	&	86.45	&	97.75	\\
&	$q = 2.00$	&	88.33	&	85.80	&	76.74	&	96.93	&	\textbf{98.00}	&	79.88	&	83.33	&	88.62	&	86.13	&	97.75	\\
&	$q = 2.25$	&	89.17	&	85.51	&	76.60	&	96.93	&	\textbf{98.00}	&	79.40	&	83.33	&	87.68	&	85.48	&	\textbf{98.31}	\\
&	$q = 2.50$	&	91.67	&	85.51	&	76.34	&	96.93	&	97.33	&	80.00	&	\textbf{84.07}	&	85.49	&	85.48	&	\textbf{98.31}	\\
&	$q = 2.75$	&	98.33	&	85.51	&	76.47	&	\textbf{97.22}	&	96.67	&	\textbf{80.48}	&	\textbf{84.07}	&	84.34	&	85.16	&	\textbf{98.31}	\\
&	$q = 2.95$	&	\textbf{100}	&	85.51	&	75.53	&	\textbf{97.22}	&	96.67	&	80.12	&	82.22	&	75.99	&	85.16	&	97.75	\\
\hline
\hline
\multicolumn{2}{|c|}{Laplacian $(q\to1)$}		
			&	93.33	&	\textbf{86.23}	&	77.81	&	97.07	&	\textbf{96.67}	&	81.69	&	\textbf{83.70}	&	\textbf{94.89}	&	76.45	&	\textbf{98.88}	\\
\hline 
\multirow{4}{*}{\begin{sideways} {\scriptsize $q$-Laplacian} \end{sideways}}		
&	$q = 1.25$	&	95.83	&	85.51	&	\textbf{77.94}	&	97.07	&	\textbf{96.67}	&	81.57	&	\textbf{83.70}	&	92.80	&	77.42	&	\textbf{98.88}	\\
&	$q = 1.50$	&	97.50	&	85.51	&	77.27	&	97.07	&	\textbf{96.67}	&	81.81	&	\textbf{83.70}	&	89.67	&	77.10	&	\textbf{98.88}	\\
&	$q = 1.75$	&	\textbf{100}	&	85.51	&	77.14	&	97.51	&	\textbf{96.67}	&	82.29	&	83.33	&	84.55	&	78.39	&	\textbf{98.88}	\\
&	$q = 1.95$	&	\textbf{100}	&	85.51	&	75.67	&	\textbf{97.80}	&	96.00	&	\textbf{83.73}	&	82.96	&	71.09	&	\textbf{86.77}	&	95.51	\\
\hline
\hline
\multirow{4}{*}{\begin{sideways} {\scriptsize Polynomial} \end{sideways}}		
&	$d = 1$ (linear)&	\textbf{100}	&	85.51	&	72.86	&	97.07	&	\textbf{98.00}	&	82.17	&	\textbf{83.70}	&	65.34	&	85.16	&	97.19	\\
&	$d = 2$		&	\textbf{100}	&	85.22	&	76.47	&	96.19	&	96.67	&	\textbf{83.86}	&	80.37	&	86.53	&	76.77	&	96.63	\\
&	$d = 5$		&	\textbf{100}	&	80.72	&	76.47	&	95.61	&	95.33	&	83.61	&	74.81	&	\textbf{94.15}	&	64.84	&	94.94	\\
&	$d = 10$	&	\textbf{100}	&	76.23	&	76.47	&	94.00	&	94.67	&	81.69	&	74.81	&	88.73	&	59.03	&	93.26	\\
\hline
\end{tabular}
\caption{Percentage of correct classification in kernel SVM using 5-fold cross validation.}
\label{svm_table}
\end{table*}

The two-dimensional example in Figure~\ref{svm_example} illustrates the nature of hyperplanes that can be 
obtained using various kernels. The decision boundaries are more flexible for
$q$-Laplacian and $q$-Gaussian kernels. 
Further, viewing the 
Laplacian and Gaussian kernels as special cases ($q\to1$), it can be said that increase in the
value of $q$ leads to more flexibility of the decision boundaries.

We compare the performance of the proposed kernels with Gaussian and Laplacian kernel SVMs
for various values of $q$. The results of 5-fold cross validation 
using multiclass SVM are shown in Table~\ref{svm_table}.
Further, the power-law nature reminds
practitioners of the popular polynomial kernels
\begin{displaymath}
 P_d(x,y) = \left(x^{T}y+c\right)^d, \qquad \text{for } x,y\in\mathbb{R}^{N},
\end{displaymath}
where the parameters $c\in(0,\infty)$ and $d\in\mathbb{N}$.
Hence, we also provide the accuracies obtained using these kernels.
We have fixed particular {$\sigma$} for each data set,
and consider $\beta$ is fixed at $\beta=\sigma^2$. 
For polynomial kernels, we consider $c=0$, while $d$ is varied. 
The best values of $q$ among all Gaussian type and Laplacian type kernels are marked for each data set. 
In case of the polynomial kernels,
we only mark those cases where the best results among these kernels is better
or comparable with the best cases of Gaussian or Laplacian types.
We note here that in the simulations, the polynomial kernels
required normalization of few other data sets as well.

The results indicate the significance of tuning the parameter $q$. 
For most cases, the $q$-Gaussian and $q$-Laplacian kernels tend to perform better than their exponential counterparts.
This can be justified by the flexibility of the separating hyperplane 
achieved. However, it has been observed (not demonstrated here)
that for very high or very low values of {$\sigma$} (or $\beta$), the
kernels give similar results, which happens because the power-law 
and the exponential natures cannot be distinguished in such cases.
The polynomial kernels, though sometimes comparable to the proposed kernels,
rarely improves upon the performance of the power-law kernels.

\subsection{Kernel Regression}

In linear basis function models for regression, 
given a set of data points, the output function is 
approximated as a linear combination of fixed non-linear functions as
{$f(x) = w_0 + \sum_{j=1}^{M} w_j \phi_j(x)$},
where {$\{\phi_1(.),\ldots,\phi_M(.)\}$} are the basis functions,
usually chosen as 
{$\phi_j(x) = \psi(x,x_j)$},
{$x_1,\ldots, x_M$} being the given data points, and $\psi$ a p.d. kernel. The constants
{$\{w_0, w_1, \ldots, w_M\}$} are obtained by minimizing least squared error. 
Such an optimization can be formulated as an
$\epsilon$-Support Vector type problem~\cite{Smola_2004_jour_StatComp}.
The kernels defined in~\eqref{qG_ker} and~\eqref{qL_ker} can also be used in this case as shown in the example in Figure~\ref{reg_example},
where $\epsilon$-SV regression is used to reconstruct a sine wave from 20 uniformly spaced sampled data points in $[0,\pi]$.
\begin{figure}[h]
\centering
\includegraphics[width=0.35\textwidth]{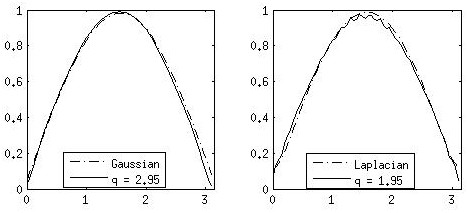}
\caption{Sine curve obtained by $\epsilon$-SVR using Gaussian, $q$-Gaussian $(q=2.95)$, 
Laplacian and $q$-Laplacian $(q=1.95)$ kernels with $\sigma = \sqrt{\beta} = 2$ and $\epsilon = 0.01$.}
\label{reg_example}
\end{figure}

\begin{table}[ht]
\small
\centering
\begin{tabular}{|c|c|c|c|c|}
\hline
\multicolumn{2}{|c|}{Data Sets}	& 11 & 12 & 13 \\
\hline
\multicolumn{2}{|c|}{Parameter ($\sigma = \sqrt{\beta}$)}
&	1	&	2	&	10	\\
\hline \hline
\multicolumn{2}{|c|}{Gaussian $(q\to1)$}		
&	11.1630	&	0.9655	&	0.4916	\\
\hline 
\multirow{8}{*}{\begin{sideways} {\scriptsize $q$-Gaussian} \end{sideways}}		
&	$q = 1.25$	&	11.0694	&	0.9218	&	0.4883	\\
&	$q = 1.50$	&	10.9674	&	0.9035	&	0.4853	\\
&	$q = 1.75$	&	10.8826	&	\textbf{0.8986}	&	0.4823	\\
&	$q = 2.00$	&	10.7406	&	0.9005	&	0.4781	\\
&	$q = 2.25$	&	10.5661	&	0.9072	&	0.4734	\\
&	$q = 2.50$	&	\textbf{10.4428}	&	0.9424	&	0.4661	\\
&	$q = 2.75$	&	10.4796	&	1.0698	&	0.4595	\\
&	$q = 2.95$	&	12.2427	&	1.5439	&	\textbf{0.4419}	\\
\hline
\hline
\multicolumn{2}{|c|}{Laplacian $(q\to1)$}		
&	\textbf{9.7681}	&	\textbf{0.5398}	&	0.4298	\\
\hline 
\multirow{4}{*}{\begin{sideways} {\scriptsize $q$-Laplacian} \end{sideways}}		
&	$q = 1.25$	&	10.2052	&	0.5532	&	0.4223	\\
&	$q = 1.50$	&	10.9578	&	0.6055	&	0.4123	\\
&	$q = 1.75$	&	13.2213	&	0.7910	&	0.3961	\\
&	$q = 1.95$	&	17.7303	&	1.6934	&	\textbf{0.3784}	\\
\hline
\hline
\multirow{4}{*}{\begin{sideways} {\scriptsize Polynomial} \end{sideways}}		
&	$d = 1$ (linear)&	13.3765	&	1.9047	&	0.4357	\\
&	$d = 2$		&	10.5835	&	2.2740	&	\textbf{0.4268}	\\
&	$d = 5$		&	16.8173	&	2.3305	&	0.5485	\\
&	$d = 10$	&	52.4609	&	2.7358	&	10.5518	\\
\hline
\end{tabular}
\caption{Mean Squared Error in kernel SVR.}
\label{reg_table}
\end{table}

The performance of the proposed kernels have been compared with polynomial, Gaussian and Laplacian kernels
for various values of $q$ using data sets 11, 12 and 13. The results of 5-fold cross validation 
using $\epsilon$-SVR $(\epsilon = 0.1)$ are shown in Table~\ref{reg_table}.
We fixed particular {$\beta=\sigma^2$} for each data set. 
Though Laplacian kernel seems to outperform its power-law variants, the 
$q$-Gaussians dominate the performance of Gaussian kernel. The results further indicate that the error
is a relatively smooth function of $q$, and does not have a fluctuating behavior, though its trend
seems to depend on the data.
The relative performance of the polynomial kernels is poor. 

\section{Conclusion}
\label{conclusion}

In this paper, we proposed a power-law generalization of Gaussian and Laplacian 
kernels based on Tsallis distributions. They retain their properties in 
the classical case as $q\to1$.
Further, due to their power-law nature, the tails of the proposed kernels decay at a 
slower rate than their exponential counterparts, which in turn 
broadens the use of these kernels in learning tasks.

We showed that the proposed kernels are positive definite for certain range of {$q$},
and presented results pertaining to the RKHS of the proposed kernels
using Bochner's theorem.
We also demonstrated the performance of the proposed kernels in support vector classification and regression. 

The power-law behavior was recognized long time back in 
many problems in the context of statistical analysis.
Recently power-law distributions have been studied in machine learning communities.
As far as our knowledge, this is the first paper that introduces and studies power-law kernels, 
leading to the notion of a \emph{``fat-tailed kernel machine''}.

\nocite{Turan_1976_book_AkademiaKiado}

\begin{thebibliography}{10}

\bibitem{Abe_2003_jour_PhyRevE}
S.~Abe and N.~Suzuki.
\newblock Itineration of the internet over nonequilibrium stationary states in
  {T}sallis statistics.
\newblock {\em Physical Review E}, 67(016106), 2003.

\bibitem{Abe_2005_jour_PhysicaA}
S.~Abe and N.~Suzuki.
\newblock Scale-free statistics of time interval between successive
  earthquakes.
\newblock {\em Physica A: Statistical Mechanics and its Applications},
  350:588--596, 2005.

\bibitem{Aronszajn_1950_jour_AMSTrans}
N.~Aronszajn.
\newblock Theory of reproducing kernels.
\newblock {\em Transactions of American Mathematical Society}, 68(3):337--404,
  1950.

\bibitem{Barabasi_1999_jour_Science}
A.~L. Barab{\'{a}}si and R.~Albert.
\newblock Emergence of scaling in random networks.
\newblock {\em Science}, 286:509--512, 1999.

\bibitem{Berg_1984_book_Springer}
C.~Berg, J.~P.~R. Christensen, and P.~Ressel.
\newblock {\em Harmonic Analysis on Semigroups: {T}heory of Positive Definite
  and Related Functions}.
\newblock Springer, 1984.

\bibitem{Bochner_1959_book_Princeton}
S.~Bochner.
\newblock {\em Lectures on {F}ourier Integral}.
\newblock Princeton University Press, Princeton N.J., 1959.

\bibitem{Chang_2011_jour_ACMTIST}
C.~C. Chang and C.~J. Lin.
\newblock {LIBSVM}: A library for support vector machines.
\newblock {\em ACM Transactions on Intelligent Systems and Technology},
  2(3):27:1--27:27, 2011.

\bibitem{Christianini_2004_book_Cambridge}
N.~Cristianini and J.~Shawe-Taylor.
\newblock {\em Kernel methods for Pattern Analysis}.
\newblock Cambridge University Press, 2004.

\bibitem{Dukkipati_2007_jour_PhyA}
A.~Dukkipati, S.~Bhatnagar, and M.~N. Murty.
\newblock On measure-theoretic aspects of nonextensive entropy functionals and
  corresponding maximum entropy prescriptions.
\newblock {\em Physica A: Statistical Mechanics and its Applications},
  384(2):758--774, 2007.

\bibitem{Frank_2010_misc_UCI}
A.~Frank and A.~Asuncion.
\newblock {\em {UCI} Machine Learning Repository}.
\newblock http://archive.ics.uci.edu/ml, University of California, Irvine,
  School of Information and Computer Sciences, 2010.

\bibitem{Ghoshdastidar_2012_conf_ISIT}
D.~Ghoshdastidar, A.~Dukkipati, and S.~Bhatnagar.
\newblock {$q$}-{G}aussian based smoothed functional algorithms for stochastic
  optimization.
\newblock In {\em International Symposium on Information Theory}. IEEE, 2012.

\bibitem{Gradshteyn_1994_book_Elesevier}
I.~S. Gradshteyn and I.~M. Ryzhik.
\newblock {\em Table of Integrals, Series and Products (5th ed.)}.
\newblock Elsevier, 1994.

\bibitem{Gutenberg_1954_book_Princeton}
B.~Gutenberg and C.~F. Richter.
\newblock {\em Seismicity of the Earth and Associated Phenomena (ed. 2)}.
\newblock Princeton University Press, Princeton, NJ, 1954.

\bibitem{Havrda_1967_jour_Kybernetika}
J.~Havrda and F.~Charv{\'{a}}t.
\newblock Quantification method of classification processes: {C}oncept of
  structural a-entropy.
\newblock {\em Kybernetika}, 3(1):30--35, 1967.

\bibitem{He_2003_jour_SPTrans}
Y.~He, A.~B. Hamza, and H.~Krim.
\newblock A generalized divergence measure for robust image registration.
\newblock {\em IEEE Transactions on Signal Processing}, 51(5):1211--1220, 2003.

\bibitem{Hofmann_2008_jour_AnnStat}
T.~Hofmann, B.~Sch{\"o}lkopf, and A.~J. Smola.
\newblock Kernel methods in machine learning.
\newblock {\em Annals of Statistics}, 36(3):1171--1220, 2008.

\bibitem{Jaynes_1957_jour_PhyRev}
E.~T. Jaynes.
\newblock Information theory and statistical mechanics.
\newblock {\em The Physical Review}, 106(4):620--630, 1957.

\bibitem{Kullback_1959_book_Wiley}
S.~Kullback.
\newblock {\em Information theory and statistics}.
\newblock John Wiley and Sons, N.Y., 1959.

\bibitem{Martins_2009_jour_JMLR}
A.~F.~T. Martins, N.~A. Smith, E.~P. Xing, P.~M.~Q. Aguiar, and M.~A.~T.
  Figueiredo.
\newblock Nonextensive information theoretic kernels on measures.
\newblock {\em Journal of Machine Learning Research}, 10:935--975, 2009.

\bibitem{Pareto_1906_jour_SocEdLib}
V.~Pareto.
\newblock Manuale di economica politica.
\newblock {\em Societa Editrice Libraria}, 1906.

\bibitem{Prato_1999_jour_PhyRevE}
D.~Prato and C.~Tsallis.
\newblock Nonextensive foundation of {L}{\'e}vy distributions.
\newblock {\em Physical Review E.}, 60(2):2398--2401, 1999.

\bibitem{Renyi_1960_jour_MTA3}
Alfred R{\'{e}}nyi.
\newblock Some fundamental questions of information theory.
\newblock {\em MTA III. Oszt. K{\"{o}}zl.}, 10:251--282, 1960.
\newblock (reprinted in~\cite{Turan_1976_book_AkademiaKiado}, pp. 526-552).

\bibitem{Sato_2001_jour_JourPhyConf}
A.~H. Sato.
\newblock \textit{q}-{G}aussian distributions and multiplicative stochastic
  processes for analysis of multiple financial time series.
\newblock {\em Journal of Physics: Conference Series}, 201(012008), 2010.

\bibitem{Scholkopf_2002_book_MIT}
B.~Scholk{\"o}pf and A.~J. Smola.
\newblock {\em Learning with Kernels}.
\newblock MIT Press, 2002.

\bibitem{Smola_2004_jour_StatComp}
A.~J. Smola and B.~Sch{\"o}lkopf.
\newblock A tutorial on support vector regression.
\newblock {\em Statistics and computing}, 14(3):199--222, 2004.

\bibitem{Smola_1998_jour_NeuNet}
A.~J. Smola, B.~Sch{\"o}lkopf, and K.~M{\"u}ller.
\newblock The connection between regularization operators and support vector
  kernels.
\newblock {\em Neural Networks}, 11:637--649, 1998.

\bibitem{Steinwart_2006_jour_ITTrans}
I.~Steinwart, D.~R. Hush, and C.~Scovel.
\newblock An explicit description of the reproducing kernel {H}ilbert spaces of
  {G}aussian {RBF} kernels.
\newblock {\em IEEE Transactions on Information Theory}, 52(10):4635--4643,
  2006.

\bibitem{Suyari_2004_jour_ITTrans}
H.~Suyari.
\newblock Generalization of {S}hannon-{K}hinchin axioms to nonextensive systems
  and the uniqueness theorem for the nonextensive entropy.
\newblock {\em IEEE Transactions on Information Theory}, 50:1783--1787, 2004.

\bibitem{Tsallis_1988_jour_StatPhy}
C.~Tsallis.
\newblock Possible generalization of {B}oltzmann-{G}ibbs statistics.
\newblock {\em Journal of Statiscal Physics}, 52(1-2):479--487, 1988.

\bibitem{Tsallis_1998_jour_PhysicaA}
C.~Tsallis, R.~S. Mendes, and A.~R. Plastino.
\newblock The role of constraints within generalized nonextensive statistics.
\newblock {\em Physica A: Statistical Mechanics and its Applications},
  261(3):534--554, 1998.

\bibitem{Turan_1976_book_AkademiaKiado}
P{\'{a}}l Tur{\'{a}}n, editor.
\newblock {\em Selected Papers of {A}lfr{\'{e}}d {R}\'{e}nyi}.
\newblock Akademia Kiado, Budapest, 1976.

\end{thebibliography}

\end{document}